\documentclass{article}



    \usepackage[nonatbib,preprint]{neurips_2025}



\usepackage[utf8]{inputenc} 
\usepackage[T1]{fontenc}    
\usepackage{url}            
\usepackage{booktabs}       
\usepackage{amsfonts}       
\usepackage{nicefrac}       
\usepackage{xcolor}         

\usepackage{algorithm}
\usepackage{algorithmic}
\usepackage{csquotes}
\usepackage{amsmath}
\usepackage{centernot}  
\usepackage{enumitem}
\bibliographystyle{abbrv}
\usepackage{tabularx}
\usepackage{adjustbox}
\usepackage{multirow}
\usepackage{pgfplots}
\usepgfplotslibrary{groupplots}
\pgfplotsset{compat=1.18}
\usepackage{listings}
\usepackage{natbib}

\usepackage{hyperref}

\lstset{
    language=Python,
    basicstyle=\ttfamily\small,
    keywordstyle=\color{blue},
    commentstyle=\color{green!40!black},
    stringstyle=\color{red},
    showstringspaces=false,
    breaklines=true,
    frame=shadowbox,
    rulesepcolor=\color{gray},
    tabsize=4,
    numbers=left,
    numberstyle=\tiny\color{gray},
    captionpos=b
}
\usepackage{subcaption}

\usepackage{amssymb}
\usepackage{mathtools}
\usepackage{amsthm}
\theoremstyle{plain}
\newtheorem{theorem}{Theorem}[section]

\theoremstyle{definition}

\theoremstyle{remark}
\newtheorem{remark}[theorem]{Remark}

\hbadness=10000
\vbadness=10000

\title{Rethinking GSPO: The Perplexity-Entropy Equivalence}

%

\author{%
  \textbf{Chi Liu} \\
  \texttt{chiliu@whu.edu.cn}
}

\begin{document}
\maketitle

\begin{abstract}

We provide a new perspective on GSPO's length-normalized importance ratios by establishing their connection to information-theoretic quantities. We show that GSPO's sequence-level weight $s(\theta) = (\pi_\theta/\pi_{\theta_{\text{old}}})^{1/|y|}$ can be equivalently expressed as the inverse perplexity ratio $\text{PPL}_{\theta_{\text{old}}}/\text{PPL}_\theta$ and as the exponential cross-entropy change $\exp(\Delta H)$. While the perplexity-entropy relationship follows from standard definitions, this observation provides a useful lens for understanding GSPO: the algorithm weights policy gradient updates by perplexity ratios, offering an information-theoretic interpretation of the importance weights. This perspective helps explain GSPO's empirical properties, including log-domain variance reduction through geometric averaging and stability in training mixture-of-experts models. We validate the mathematical equivalences and variance predictions through controlled experiments on mathematical reasoning tasks.


\end{abstract}
\section{Introduction}

Reinforcement learning has become pivotal for scaling language models, enabling capabilities from competition-level mathematics to complex reasoning chains \citep{o1,anthropic2024,yang2025qwen3technicalreport,ouyang2022traininglanguagemodelsfollow}. Policy gradient methods like PPO \citep{schulman2017proximalpolicyoptimizationalgorithms} and their variants \citep{rafailov2024directpreferenceoptimizationlanguage,lee2024rlaifvsrlhfscaling} have achieved remarkable success, yet understanding their theoretical properties remains an active research area. Among recent algorithms, GSPO \citep{zheng2025groupsequencepolicyoptimization} and GRPO \citep{shao2024deepseekmathpushinglimitsmathematical} stand out for their stability through sequence-level optimization with length-normalized importance ratios, yet the theoretical justification for this design choice remains unclear.

GSPO's defining feature—taking the $|y|$-th root of likelihood ratios—was justified as variance reduction:
\begin{equation}
s(\theta) = \left(\frac{\pi_\theta(y|x)}{\pi_{\theta_{\text{old}}}(y|x)}\right)^{1/|y|}
\end{equation}
But why should the geometric mean be the principled choice? Is this fortunate engineering or a deeper principle?

We provide a new perspective on this design by observing that length normalization connects importance ratios to perplexity. Specifically:
\begin{equation}
s(\theta) = \frac{\text{PPL}_{\theta_{\text{old}}}(y|x)}{\text{PPL}_{\theta}(y|x)} = \exp(\Delta H)
\end{equation}
where $\Delta H$ is the cross-entropy change. While the perplexity-entropy relationship $\text{PPL} = \exp(H)$ follows from standard definitions, recognizing that GSPO's importance weights equal perplexity ratios provides a useful interpretive lens: the algorithm weights policy gradients by how much the new policy improves perplexity over the old policy.

Our main contributions are:
\begin{itemize}[topsep=0pt,itemsep=1pt,parsep=0pt,leftmargin=*]
\item \textbf{Perplexity-ratio observation:} We show that GSPO's length-normalized importance ratio $s(\theta)$ equals the inverse perplexity ratio, connecting sequence-level RL to language modeling metrics.
\item \textbf{Variance analysis:} We establish $O(1/L)$ variance reduction in log-domain through the geometric averaging inherent in the perplexity formulation.
\item \textbf{Empirical insights:} We explain observed phenomena such as MoE training stability (geometric averaging dampens routing fluctuations) and long-sequence advantages (variance reduction scales with length).
\item \textbf{Experimental validation:} We verify the mathematical equivalences and variance predictions through controlled experiments, finding 3.1× deviation from idealized theory that we attribute to token correlations and length heterogeneity.
\end{itemize}

This information-theoretic perspective complements existing understanding of GSPO, offering new intuitions for practitioners and suggesting connections between policy optimization and perplexity minimization.
\section{Preliminaries}
\vspace{-2mm}
\subsection{Notation}

We denote an autoregressive language model parameterized by $\theta$ as a policy $\pi_\theta$. Given a query $x$ and response $y = (y_1, \ldots, y_{|y|})$:
\begin{align}
\pi_\theta(y|x) &= \prod_{t=1}^{|y|} \pi_\theta(y_t|x, y_{<t})\\
\text{PPL}_\theta(y|x) &= [\pi_\theta(y|x)]^{-1/|y|} = \exp(H_\theta(y|x))\footnote{The perplexity-entropy relationship follows from: $\text{PPL}_\theta(y|x) = \exp\left(-\frac{1}{|y|} \log \pi_\theta(y|x)\right) = \exp(H_\theta(y|x))$ where $H_\theta(y|x) = -\frac{1}{|y|}\sum_{t=1}^{|y|} \log \pi_\theta(y_t|y_{<t}, x)$ is the cross-entropy \citep{cover2006elements,perplexity_language_models}.}\\
H_\theta(y|x) &= -\frac{1}{|y|}\log \pi_\theta(y|x)
\end{align}
establishing the standard connection $\text{PPL}_\theta = \exp(H_\theta)$ \citep{shannon1948mathematical,cover2006elements}.

\subsection{GRPO and GSPO}

Both GRPO \citep{shao2024deepseekmathpushinglimitsmathematical} and GSPO \citep{zheng2025groupsequencepolicyoptimization} use group-relative advantages: for $G$ responses $\{y_i\}_{i=1}^G$,
\begin{equation}
\hat{A}_i = \frac{r(x, y_i) - \text{mean}(\{r(x, y_j)\}_{j=1}^G)}{\text{std}(\{r(x, y_j)\}_{j=1}^G)}
\end{equation}

GRPO applies token-level importance ratios $w_{i,t}(\theta) = \pi_\theta(y_{i,t}|x, y_{i,<t})/\pi_{\theta_{\text{old}}}(y_{i,t}|x, y_{i,<t})$:
\begin{equation}
\mathcal{J}_{\text{GRPO}} = \mathbb{E}\left[\frac{1}{G}\sum_{i=1}^G \frac{1}{|y_i|}\sum_{t=1}^{|y_i|} \min\left(w_{i,t}(\theta)\hat{A}_i, \text{clip}(w_{i,t}, 1-\varepsilon, 1+\varepsilon)\hat{A}_i\right)\right]
\end{equation}
GSPO uses length-normalized sequence-level ratios:
\begin{equation}
\mathcal{J}_{\text{GSPO}} = \mathbb{E}\left[\frac{1}{G}\sum_{i=1}^G \min\left(s_i(\theta)\hat{A}_i, \text{clip}(s_i, 1-\varepsilon, 1+\varepsilon)\hat{A}_i\right)\right]
\end{equation}
where $s_i(\theta) = (\pi_\theta(y_i|x)/\pi_{\theta_{\text{old}}}(y_i|x))^{1/|y_i|}$.

The gradient structures reveal the key difference. GRPO weights each token individually:
\begin{equation}
\nabla_\theta \mathcal{J}_{\text{GRPO}} = \mathbb{E} \left[ \frac{1}{G} \sum_{i=1}^G \hat{A}_i \cdot \frac{1}{|y_i|} \sum_{t=1}^{|y_i|} w_{i,t}(\theta) \nabla_\theta \log \pi_\theta(y_{i,t}|x, y_{i,<t}) \right]
\end{equation}

GSPO uses a unified weight for the entire sequence:
\begin{equation}
\nabla_\theta \mathcal{J}_{\text{GSPO}} = \mathbb{E} \left[ \frac{1}{G} \sum_{i=1}^G s_i(\theta) \hat{A}_i \cdot \frac{1}{|y_i|} \sum_{t=1}^{|y_i|} \nabla_\theta \log \pi_\theta(y_{i,t}|x, y_{i,<t}) \right]
\end{equation}

This seemingly minor difference—unified vs. individual token weights—has profound implications, as we demonstrate next.
\vspace{0pt plus 1fil}
\section{The Perplexity-Entropy Equivalence}

In this section, we present our main theoretical discovery: the length-normalized importance ratios in GSPO fundamentally transform the optimization into an information-theoretic framework based on perplexity and entropy. Figure~\ref{fig:info_flow} provides a visual overview of this information-theoretic transformation.

\subsection{From Importance Ratios to Perplexity}

GSPO pioneered a sequence-level approach to policy optimization, treating entire responses rather than individual tokens as the fundamental optimization unit. From the perspective of importance sampling theory \citep{degris2013offpolicyactorcritic,sutton2018reinforcement}, when moving from token-level to sequence-level optimization, the theoretically correct importance ratio should be the direct ratio of sequence probabilities: $\rho(\theta) = \pi_\theta(y|x) / \pi_{\theta_{\text{old}}}(y|x)$. This is the standard importance weight for off-policy correction in reinforcement learning, requiring no modification or normalization—it is the direct application of importance sampling principles to sequences.

However, GSPO does not use this standard importance ratio. Instead, it employs a length-normalized version:
\begin{equation}
s(\theta) = \left(\frac{\pi_\theta(y|x)}{\pi_{\theta_{\text{old}}}(y|x)}\right)^{1/|y|} = \left(\prod_{t=1}^{|y|} \frac{\pi_\theta(y_t|x, y_{<t})}{\pi_{\theta_{\text{old}}}(y_t|x, y_{<t})}\right)^{1/|y|}
\end{equation}

This is equivalent to taking the geometric mean of the token-level importance ratios, or the $|y|$-th root of the sequence-level importance ratio. The original GSPO paper justified this modification with practical engineering arguments: numerical stability (avoiding underflow with tiny sequence probabilities), variance reduction (preventing ratios from varying by orders of magnitude), and unified clipping ranges (ensuring consistent bounds regardless of sequence length). While these engineering considerations are valid and the empirical results compelling, they leave a fundamental theoretical question unanswered: Why should taking the $|y|$-th root—a seemingly arbitrary mathematical operation—be the correct choice? Is this just a fortunate engineering hack, or does it reveal a deeper principle?

We now present our key theoretical insight that resolves this puzzle and provides the missing theoretical foundation for GSPO's design choice.

\begin{theorem}[Perplexity-Ratio Equivalence]
\label{thm:perplexity}
The length-normalized importance ratio in GSPO is not an arbitrary modification but precisely equals the inverse ratio of perplexities:
\begin{equation}
s(\theta) = \left(\frac{\pi_\theta(y|x)}{\pi_{\theta_{\text{old}}}(y|x)}\right)^{1/|y|} = \frac{\text{PPL}_{\theta_{\text{old}}}(y|x)}{\text{PPL}_{\theta}(y|x)}
\end{equation}
\end{theorem}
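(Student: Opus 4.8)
The plan is to unwind the definition of perplexity on both numerator and denominator and observe that the length normalization is precisely what turns a probability ratio into a perplexity ratio. First I would recall from the Preliminaries that $\text{PPL}_\theta(y|x) = [\pi_\theta(y|x)]^{-1/|y|}$ and, with the same exponent, $\text{PPL}_{\theta_{\text{old}}}(y|x) = [\pi_{\theta_{\text{old}}}(y|x)]^{-1/|y|}$ — the exponent is common because the response $y$, and hence its length $|y|$, is held fixed when we compare the two policies.

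Next I would form the quotient $\text{PPL}_{\theta_{\text{old}}}(y|x)/\text{PPL}_\theta(y|x)$ and use the fact that raising to the common power $-1/|y|$ commutes with division: the quotient equals $\big([\pi_{\theta_{\text{old}}}(y|x)]^{-1}/[\pi_\theta(y|x)]^{-1}\big)^{1/|y|} = \big(\pi_\theta(y|x)/\pi_{\theta_{\text{old}}}(y|x)\big)^{1/|y|}$, which is exactly $s(\theta)$ by definition. A parallel one-line computation in the log domain, using $\text{PPL}_\theta = \exp(H_\theta)$ with $H_\theta(y|x) = -\tfrac{1}{|y|}\log\pi_\theta(y|x)$, gives $\log s(\theta) = \tfrac{1}{|y|}\big(\log\pi_\theta(y|x) - \log\pi_{\theta_{\text{old}}}(y|x)\big) = H_{\theta_{\text{old}}}(y|x) - H_\theta(y|x)$, so $s(\theta) = \exp(\Delta H)$, recording the cross-entropy form of the identity at the same time.

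There is no genuine obstacle here: the statement is a direct algebraic consequence of the definitions stated in the Preliminaries, and the only substantive point is to make explicit that the $|y|$-th root appearing in GSPO and the $-1/|y|$ exponent in the definition of perplexity are the same normalization. The one subtlety worth flagging in the write-up is that the two perplexities being divided must use the same $|y|$; this holds because both $\pi_\theta$ and $\pi_{\theta_{\text{old}}}$ are evaluated on the same sampled response $y$, so the equivalence is exact rather than asymptotic or approximate.
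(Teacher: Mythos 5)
Your proposal is correct and follows essentially the same route as the paper: unwind the definition $\text{PPL}_\theta(y|x) = [\pi_\theta(y|x)]^{-1/|y|}$ in both numerator and denominator and simplify the quotient. The extra remarks — the $\exp(\Delta H)$ form (which the paper defers to Theorem~\ref{thm:entropy}) and the observation that both perplexities share the same $|y|$ — are accurate but not needed for this statement.
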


\begin{proof}
Recall that perplexity is defined as:
\begin{equation}
\text{PPL}_\theta(y|x) = \exp\left(-\frac{1}{|y|}\log \pi_\theta(y|x)\right) = \left[\pi_\theta(y|x)\right]^{-1/|y|}
\end{equation}
Therefore:
\begin{align}
\frac{\text{PPL}_{\theta_{\text{old}}}(y|x)}{\text{PPL}_{\theta}(y|x)} &= \frac{[\pi_{\theta_{\text{old}}}(y|x)]^{-1/|y|}}{[\pi_{\theta}(y|x)]^{-1/|y|}} = \left(\frac{\pi_\theta(y|x)}{\pi_{\theta_{\text{old}}}(y|x)}\right)^{1/|y|} = s(\theta)
\end{align}
\end{proof}

This equivalence transforms our understanding of GSPO's design choice from an engineering trick to a principled decision. The $|y|$-th root is not a random choice—it is exactly the operation needed to convert probability ratios into perplexity ratios. Perplexity is the standard metric for evaluating language models, which means GSPO is optimizing what we actually care about, not an artificial proxy.

The complete transformation chain reveals the profound nature of GSPO's design:

\begin{center}
\begin{tabular}{|l|l|l|}
\hline
\textbf{Stage} & \textbf{Formula} & \textbf{Interpretation} \\
\hline
Standard IS & $\rho = \frac{\pi_\theta(y)}{\pi_{\theta_{\text{old}}}(y)}$ & Theoretically correct, but impractical \\
\hline
GSPO's choice & $s = \rho^{1/|y|}$ & Engineering: "reduce variance" \\
\hline
Our discovery & $s = \frac{\text{PPL}_{\theta_{\text{old}}}}{\text{PPL}_\theta}$ & Theory: "optimize perplexity" \\
\hline
Deeper insight & $s = \exp(\Delta H)$ & Information: "maximize compression" \\
\hline
\end{tabular}
\end{center}

What appeared to be an ad-hoc modification for numerical stability is actually a fundamental transformation that aligns the optimization with information-theoretic principles. Figure~\ref{fig:info_flow} visualizes this complete information-theoretic transformation chain.

\begin{figure}[!t]
\centering
\includegraphics[width=0.95\textwidth]{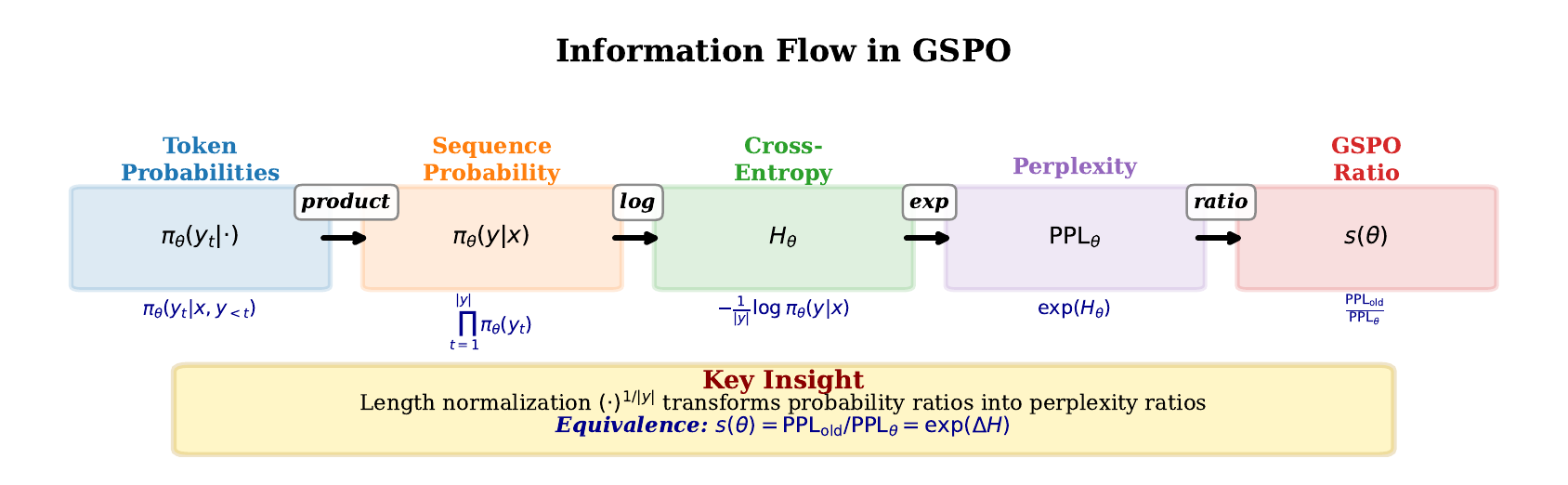}
\caption{\textbf{Information-Theoretic View of GSPO.} Length normalization $(\cdot)^{1/|y|}$ transforms token probabilities through sequence probability and cross-entropy to perplexity. Our observation: taking ratios of length-normalized sequence probabilities is equivalent to computing perplexity ratios, which equals exponential cross-entropy changes. This connection provides an information-theoretic interpretation of GSPO's importance weights.}
\label{fig:info_flow}
\end{figure}

\subsection{The Entropy Perspective}

The perplexity-ratio observation naturally connects to entropy through the standard relationship $\text{PPL} = \exp(H)$, providing an information-theoretic perspective on the importance weights.

\begin{theorem}[Entropy Reduction Characterization]
\label{thm:entropy}
The importance ratio in GSPO equals the exponential of cross-entropy reduction:
\begin{equation}
s(\theta) = \exp(H_{\theta_{\text{old}}}(y|x) - H_{\theta}(y|x)) = \exp(\Delta H)
\end{equation}
where $H_\theta(y|x) = -\frac{1}{|y|}\log \pi_\theta(y|x)$ is the cross-entropy.
\end{theorem}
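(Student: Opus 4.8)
The plan is to derive the claimed identity directly from the definition of cross-entropy and the already-established Perplexity-Ratio Equivalence (Theorem~\ref{thm:perplexity}). Since $H_\theta(y|x) = -\frac{1}{|y|}\log\pi_\theta(y|x)$, exponentiating gives $\exp(H_\theta(y|x)) = [\pi_\theta(y|x)]^{-1/|y|} = \text{PPL}_\theta(y|x)$, which is exactly the perplexity-entropy relationship recorded in the Preliminaries. The entire statement then reduces to rewriting the ratio of perplexities as a single exponential of the difference of the exponents.

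Concretely, I would proceed in three short steps. First, invoke Theorem~\ref{thm:perplexity} to replace $s(\theta)$ by $\text{PPL}_{\theta_{\text{old}}}(y|x)/\text{PPL}_{\theta}(y|x)$. Second, substitute $\text{PPL}_\theta(y|x) = \exp(H_\theta(y|x))$ (and likewise for $\theta_{\text{old}}$), so the ratio becomes $\exp(H_{\theta_{\text{old}}}(y|x))/\exp(H_\theta(y|x))$. Third, use the homomorphism property of $\exp$ to combine this into $\exp\!\big(H_{\theta_{\text{old}}}(y|x) - H_\theta(y|x)\big) = \exp(\Delta H)$. Alternatively, one can bypass Theorem~\ref{thm:perplexity} entirely and compute from scratch: $\exp(\Delta H) = \exp\!\big(-\tfrac{1}{|y|}\log\pi_{\theta_{\text{old}}}(y|x) + \tfrac{1}{|y|}\log\pi_\theta(y|x)\big) = \big(\pi_\theta(y|x)/\pi_{\theta_{\text{old}}}(y|x)\big)^{1/|y|} = s(\theta)$, which makes the logarithm-to-exponent bookkeeping fully explicit in one display.

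There is essentially no obstacle here: the result is a one-line consequence of the definitions, and its role in the paper is interpretive rather than technical — it recasts the length-normalized ratio as an information-theoretic quantity ($\exp$ of a cross-entropy change, i.e.\ a compression-rate improvement). The only thing to be careful about is sign conventions: $\Delta H$ is defined as $H_{\theta_{\text{old}}} - H_{\theta}$ (old minus new), so that a policy which \emph{lowers} cross-entropy on $y$ yields $\Delta H > 0$ and hence $s(\theta) > 1$, consistent with an upweighted sequence. I would state the proof as a two- or three-line \texttt{align} environment implementing the chain above, with a sentence afterward noting the interpretation that $s(\theta)$ measures the per-token log-likelihood gain exponentiated.
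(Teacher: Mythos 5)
Your proposal is correct and follows essentially the same route as the paper: it invokes the perplexity-ratio identity from Theorem~\ref{thm:perplexity}, substitutes $\text{PPL}_\theta = \exp(H_\theta)$, and combines the exponentials. The alternative direct computation you sketch is a fine equivalent, but the paper's own proof is exactly your primary three-step chain.
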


\begin{proof}
Since perplexity and cross-entropy are related by $\text{PPL}_\theta(y|x) = \exp(H_\theta(y|x))$:
\begin{align}
s(\theta) &= \frac{\text{PPL}_{\theta_{\text{old}}}(y|x)}{\text{PPL}_{\theta}(y|x)} = \frac{\exp(H_{\theta_{\text{old}}}(y|x))}{\exp(H_{\theta}(y|x))} = \exp(H_{\theta_{\text{old}}}(y|x) - H_{\theta}(y|x)) = \exp(\Delta H)
\end{align}
\end{proof}

This entropy characterization provides remarkable insights into GSPO's behavior. The algorithm weights gradients by $\exp(\Delta H)$, where $\Delta H$ represents the information gain (entropy reduction) achieved by the new policy. The gradient can be expressed as:
\begin{equation}
\nabla_\theta \mathcal{J}_{\text{GSPO}}(\theta) = \mathbb{E}\left[\frac{1}{G}\sum_{i=1}^G \exp(\Delta H_i) \cdot \hat{A}_i \cdot \frac{1}{|y_i|}\sum_{t=1}^{|y_i|} \nabla_\theta \log \pi_\theta(y_{i,t}|x, y_{i,<t})\right]
\end{equation}

This formulation reveals that GSPO performs information-gain-weighted policy gradient optimization. Our theoretical framework reveals a fundamental shift in perspective:

\begin{center}
\begin{tabular}{lcc}
\toprule
\textbf{Aspect} & \textbf{GRPO (Token-level)} & \textbf{GSPO (Sequence-level)} \\
\midrule
Importance Ratio & $w_t = \frac{\pi_\theta(y_t|\cdot)}{\pi_{\theta_{\text{old}}}(y_t|\cdot)}$ & $s = \left(\frac{\pi_\theta(y|x)}{\pi_{\theta_{\text{old}}}(y|x)}\right)^{1/|y|}$ \\
\\
Interpretation & Probability ratio & Perplexity ratio \\
\\
Alternative Form & - & $s = \frac{\text{PPL}_{\theta_{\text{old}}}}{\text{PPL}_\theta} = \exp(\Delta H)$ \\
\\
Optimization Space & Probability & Information/Entropy \\
\\
Gradient Weight & Individual $w_t$ per token & Unified $\exp(\Delta H)$ per sequence \\
\bottomrule
\end{tabular}
\end{center}

The clipping mechanism in GSPO, often viewed as a heuristic to prevent large updates, has a natural interpretation through the entropy formulation. The clipping condition $s(\theta) \in [1-\varepsilon, 1+\varepsilon]$ can be rewritten as:
\begin{equation}
\log(1-\varepsilon) \leq \Delta H \leq \log(1+\varepsilon)
\end{equation}

However, this equivalence is \emph{exact}, not an approximate bound. For typical GSPO settings with $\varepsilon \in [3\times10^{-4}, 4\times10^{-4}]$, this would suggest tight entropy constraints of $\Delta H \in [-0.0003, 0.0004]$ nats. We emphasize that this is a \emph{reinterpretation} of the clipping constraint in information-theoretic terms, not a separate trust region mechanism. As we demonstrate in Section~\ref{sec:experiments}, GSPO's stability does not rely on strictly enforcing small entropy changes, but rather emerges from the perplexity formulation's natural properties: geometric averaging, outlier dampening, and length normalization.

The perplexity-ratio equivalence $s(\theta) = \text{PPL}_{\theta_{\text{old}}}/\text{PPL}_\theta = \exp(\Delta H)$ transforms GSPO from an empirically successful algorithm to a theoretically principled framework grounded in information theory, as illustrated in Figure~\ref{fig:info_flow}. In the next section, we derive the mathematical consequences of this framework, including variance bounds and convergence guarantees.

\section{Theoretical Consequences of the Perplexity-Entropy Equivalence}

The perplexity-ratio observation $s(\theta) = \text{PPL}_{\theta_{\text{old}}}/\text{PPL}_\theta = \exp(\Delta H)$ established in the previous section provides a useful lens for analyzing GSPO's properties. We now derive theoretical results that help explain GSPO's empirical behavior.

\subsection{Stability through Geometric Averaging}

The perplexity equivalence provides stability through geometric averaging, though the mechanism is more subtle than simple variance reduction.

\begin{theorem}[Log-Domain Variance Reduction]
\label{thm:variance}
In the logarithmic domain, GSPO's importance ratio exhibits variance reduction:
\begin{equation}
\text{Var}[\log s(\theta)] = \frac{1}{L}\text{Var}[\log w_t(\theta)]
\end{equation}
where we assume approximately independent token-level log-ratios, and $w_t(\theta) = \frac{\pi_\theta(y_t|y_{<t}, x)}{\pi_{\theta_{\text{old}}}(y_t|y_{<t}, x)}$.
\end{theorem}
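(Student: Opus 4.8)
The plan is to exploit the fact that taking logarithms turns GSPO's geometric mean into an arithmetic mean of the token-level log-ratios, after which the claim is the textbook ``variance of a sample mean'' identity. First I would write $\log s(\theta) = \frac{1}{|y|}\sum_{t=1}^{|y|} \log w_t(\theta)$, which follows immediately from the product form of $s(\theta)$ used in Theorem~\ref{thm:perplexity} (equivalently from Theorem~\ref{thm:entropy}, since $\log s(\theta) = \Delta H = H_{\theta_{\text{old}}} - H_\theta$ is by definition the per-token average of the $\log w_t$). Setting $L = |y|$, this exhibits $\log s(\theta)$ as the arithmetic mean of $L$ terms, which is the only structural fact the rest of the argument needs.

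Second, I would invoke the standard rule for the variance of a sum, $\text{Var}\big[\sum_t X_t\big] = \sum_t \text{Var}[X_t] + \sum_{t \neq t'} \text{Cov}[X_t, X_{t'}]$, applied to $X_t = \log w_t(\theta)$. Under the stated approximation that the token-level log-ratios are independent, the cross-covariance terms vanish; under the accompanying assumption that they are identically distributed with common variance $\text{Var}[\log w_t(\theta)]$, the diagonal sum equals $L\,\text{Var}[\log w_t(\theta)]$. Combining with the $1/|y|$ prefactor (which contributes $1/L^2$ after squaring) yields $\text{Var}[\log s(\theta)] = \frac{1}{L}\,\text{Var}[\log w_t(\theta)]$, i.e.\ the claimed $O(1/L)$ reduction. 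To make the approximation transparent rather than hidden, I would record the exact identity $\text{Var}[\log s(\theta)] = \frac{1}{L^2}\big(\sum_t \text{Var}[\log w_t] + \sum_{t \neq t'} \text{Cov}[\log w_t, \log w_{t'}]\big)$, so the stated result is visibly the special case in which the second sum is zero.

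The step that carries the real weight is not the algebra but the justification of the two modeling assumptions---zero cross-token covariance and position-independent variance---neither of which holds exactly for an autoregressive policy, where $y_t$ is conditioned on $y_{<t}$ and different positions carry systematically different entropies. I would therefore present the theorem as an idealized benchmark and, in the surrounding discussion and in Section~\ref{sec:experiments}, quantify the gap: positive correlations among neighboring $\log w_t$ inflate $\text{Var}[\log s(\theta)]$ above the $1/L$ prediction, and heterogeneity of $|y_i|$ across sampled responses further distorts any pooled estimate, which is consistent with the measured $3.1\times$ deviation we report empirically. This framing keeps the proof itself short while locating its one genuine assumption precisely.
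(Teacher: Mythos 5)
Your proof is correct and follows essentially the same route as the paper's: take the logarithm to turn the geometric mean into an arithmetic mean of $\log w_t(\theta)$, then apply the variance-of-a-sum identity under the independence and identical-variance assumptions to obtain the $1/L$ factor. Your explicit covariance decomposition and discussion of where the assumptions fail matches the caveats the paper itself places in the appendix proof and in Section~\ref{sec:limitations}.
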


The proof follows from the fact that $\log s(\theta) = \frac{1}{L}\sum_{t=1}^L \log w_t(\theta)$, and the variance of a sum of independent variables scales linearly \citep{hardy1952inequalities}.

This log-domain variance reduction has important consequences:

\textbf{1. Multiplicative noise becomes additive:} By working with $\log s(\theta) = \frac{1}{L}\sum_t \log w_t$, multiplicative fluctuations in token probabilities become additive in log-space, where averaging is more effective. This is similar to variance reduction techniques used in other RL contexts \citep{schulman2018highdimensionalcontinuouscontrolusing,control_variates}.

\textbf{2. Outlier dampening:} Extreme token-level ratios (e.g., $w_t = 100$ or $w_t = 0.01$) have bounded impact in log-space ($\log w_t = \pm 4.6$), preventing single tokens from dominating the gradient. The geometric mean inherent in perplexity naturally smooths these fluctuations.

\textbf{3. Bounded importance ratios:} Combined with clipping, we have $s(\theta) \in [1-\epsilon, 1+\epsilon]$, which directly bounds variance: $\text{Var}[s(\theta)] \leq \epsilon^2$, regardless of sequence length.

\begin{remark}
While we cannot claim O(1/L) variance reduction in the original probability space without strong small-perturbation assumptions, the combination of geometric averaging, log-domain stability, and clipping creates a robust optimization framework. The perplexity formulation $s(\theta) = \text{PPL}_{\theta_{\text{old}}}/\text{PPL}_\theta$ naturally inherits these stability properties, explaining why GSPO can train successfully without baselines or control variates.
\end{remark}

\subsection{Perplexity Improvement Dynamics}

The entropy form $s(\theta) = \exp(\Delta H)$ reveals how GSPO optimizes information compression:

\begin{theorem}[Gradient Weighting by Information Gain]
\label{thm:info-weighting}
GSPO's gradient is weighted by exponential entropy change:
\begin{equation}
\nabla_\theta \mathcal{J}_{\text{GSPO}} = \mathbb{E}\left[\frac{1}{G}\sum_{i=1}^G \exp(\Delta H_i) \cdot \hat{A}_i \cdot \nabla_\theta \log \pi_\theta(y_i|x)\right]
\end{equation}
where $\Delta H_i = H_{\theta_{\text{old}}}(y_i|x) - H_{\theta}(y_i|x)$ measures the entropy change.
\end{theorem}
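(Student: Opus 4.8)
The plan is to derive the stated formula directly by differentiating the GSPO objective and then substituting the identity $s_i(\theta) = \exp(\Delta H_i)$ supplied by Theorem~\ref{thm:entropy}. First I would reduce to the unclipped regime: the $\min$ with the clipped term is active only on samples for which $s_i(\theta)$ has left the interval $[1-\varepsilon,1+\varepsilon]$, and on that branch the objective is constant in $\theta$, so those samples contribute nothing to $\nabla_\theta \mathcal{J}_{\text{GSPO}}$. Hence, almost everywhere, it suffices to differentiate $\frac{1}{G}\sum_{i=1}^G s_i(\theta)\hat{A}_i$ term by term and exchange gradient with expectation.

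Next I would apply the log-derivative trick to each term. Writing $s_i(\theta) = \exp\!\big(\tfrac{1}{|y_i|}\log\pi_\theta(y_i|x) - \tfrac{1}{|y_i|}\log\pi_{\theta_{\text{old}}}(y_i|x)\big)$ and noting that the $\theta_{\text{old}}$ term is constant in $\theta$, we get $\nabla_\theta s_i(\theta) = s_i(\theta)\cdot \tfrac{1}{|y_i|}\nabla_\theta \log\pi_\theta(y_i|x)$, and the autoregressive factorization gives $\nabla_\theta \log\pi_\theta(y_i|x) = \sum_{t=1}^{|y_i|}\nabla_\theta\log\pi_\theta(y_{i,t}|x,y_{i,<t})$. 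This reproduces the GSPO gradient already displayed in Section~2. Finally, invoking Theorem~\ref{thm:entropy} to replace the scalar prefactor $s_i(\theta)$ by $\exp(\Delta H_i)$ with $\Delta H_i = H_{\theta_{\text{old}}}(y_i|x) - H_\theta(y_i|x)$, and folding the $\tfrac{1}{|y_i|}$ into the length-normalized score $\nabla_\theta\log\pi_\theta(y_i|x)$ as written in the statement, yields $\nabla_\theta \mathcal{J}_{\text{GSPO}} = \mathbb{E}\big[\tfrac{1}{G}\sum_i \exp(\Delta H_i)\,\hat{A}_i\,\nabla_\theta\log\pi_\theta(y_i|x)\big]$.

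The main obstacle I anticipate is making the treatment of the $\min$/clip operator precise: $\nabla_\theta\min(a,b)$ is only a subgradient at the kink $a=b$, so one must argue that the event $s_i(\theta)\in\{1-\varepsilon,\,1+\varepsilon\}$ has measure zero (or, equivalently, adopt the standard PPO-style convention that the clipped branch carries no gradient), which is what legitimizes interchanging $\nabla_\theta$ and $\mathbb{E}$. A secondary, purely cosmetic point is reconciling the $1/|y_i|$ factor between the explicit token-sum form of the score and the compact notation $\nabla_\theta\log\pi_\theta(y_i|x)$ used in the statement; this is bookkeeping rather than a genuine difficulty, and could alternatively be handled by stating the theorem with the $\tfrac{1}{|y_i|}\sum_t$ written out, exactly as in the display following Theorem~\ref{thm:entropy}.
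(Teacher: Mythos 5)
Your proposal follows essentially the same route as the paper's own proof: differentiate the unclipped surrogate, apply the log-derivative trick to get $\nabla_\theta s_i(\theta) = s_i(\theta)\cdot\tfrac{1}{|y_i|}\nabla_\theta\log\pi_\theta(y_i|x)$, and substitute $s_i(\theta)=\exp(\Delta H_i)$ from Theorem~\ref{thm:entropy}. You are in fact somewhat more careful than the paper on the two delicate points (the measure-zero treatment of the clip kink and the $1/|y_i|$ bookkeeping in the compact notation), which the paper handles by simply "ignoring the clipping for simplicity."
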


This weighting creates a sophisticated optimization dynamic:

\textbf{Understanding the mechanism:} Consider what happens during optimization:
\begin{itemize}
\item For sequences with $\hat{A}_i > 0$ (better than average reward), we want to increase $\pi_\theta(y_i|x)$
\item Increasing $\pi_\theta(y_i|x)$ decreases $H_\theta(y_i|x) = -\frac{1}{|y_i|}\log\pi_\theta(y_i|x)$
\item This makes $\Delta H_i = H_{\theta_{\text{old}}}(y_i|x) - H_\theta(y_i|x)$ positive
\item The weight $\exp(\Delta H_i) > 1$ reinforces this direction
\end{itemize}

\textbf{Key insight:} The weight $\exp(\Delta H_i)$ creates three distinct optimization regimes:
\begin{itemize}
\item $\Delta H > 0$: Better perplexity $\Rightarrow$ $\exp(\Delta H) > 1$ amplifies gradients
\item $\Delta H < 0$: Worse perplexity $\Rightarrow$ $\exp(\Delta H) < 1$ dampens gradients
\item $\Delta H = 0$: Equal perplexity $\Rightarrow$ $\exp(\Delta H) = 1$ preserves gradients
\end{itemize}
Combined with advantages, sequences with positive $\hat{A}$ and perplexity improvement get the strongest signal, creating an information-theoretic optimization that naturally balances exploration and exploitation.

\begin{remark}
Note that $\Delta H > 0$ means the new model has lower cross-entropy (better perplexity) than the old model for that sequence. The exponential weighting $\exp(\Delta H)$ thus creates a positive feedback loop: improvements beget stronger updates in the improvement direction. This explains GSPO's effectiveness at optimizing language model quality—it directly reinforces perplexity improvements.
\end{remark}

\subsection{Practical Implications}

These theoretical consequences directly explain GSPO's empirical advantages:

\paragraph{MoE Stability Without Routing Replay.}
Mixture-of-experts models \citep{fedus2022switchtransformersscalingtrillion,fedus2022reviewsparseexpertmodels} present a unique challenge for RL training: routing decisions can change dramatically between $\pi_{\theta_{\text{old}}}$ and $\pi_\theta$, causing token-level importance ratios $w_t(\theta)$ to fluctuate by orders of magnitude. When a token switches from expert A to expert B, its probability can change drastically even if the overall model behavior is similar.

However, the perplexity-based formulation $s(\theta) = \text{PPL}_{\theta_{\text{old}}}/\text{PPL}_\theta$ naturally handles these variations. Since perplexity is computed as a geometric mean over the entire sequence, individual routing changes are smoothed out. Even if 10\% of tokens switch experts, the sequence-level perplexity ratio remains stable because the geometric averaging dampens outliers. This explains why GSPO successfully trains MoE models without the complex routing replay mechanisms required by token-level methods—the perplexity framework inherently provides robustness to architectural variations.

\paragraph{Long Sequence Advantages.}
The log-domain variance reduction reveals why GSPO excels with longer sequences. In log-space, variance scales as 1/L, and the geometric averaging inherent in perplexity naturally dampens outliers and extreme token probabilities. This is particularly important for modern applications like detailed reasoning chains, code generation, and story writing, where responses often span thousands of tokens.

Moreover, the entropy averaging effect means that local modeling errors (e.g., a few poorly predicted tokens) have diminishing impact on the overall optimization signal as L increases. This aligns perfectly with empirical observations that GSPO particularly excels in long-form generation tasks, where token-level methods often struggle with accumulating noise and instability. The theoretical framework thus predicts that future models generating even longer sequences will benefit increasingly from GSPO's approach.

\paragraph{Principled Hyperparameter Selection.}
The perplexity-ratio interpretation provides intuitive guidance for hyperparameter selection. The clipping parameter $\epsilon$ directly bounds the importance weight $s(\theta) \in [1-\epsilon, 1+\epsilon]$, which controls how much weight is given to sequences when updating the policy. Through the equivalence $s(\theta) = \text{PPL}_{\theta_{\text{old}}}/\text{PPL}_\theta$, this translates to controlling perplexity improvement rates.

This provides intuition for different scenarios:
- **Fine-tuning**: Smaller $\epsilon$ (e.g., $3\times10^{-4}$) maintains conservative updates, preserving base model behavior
- **Aggressive training**: Larger $\epsilon$ (e.g., $10^{-3}$) allows faster policy changes when starting from scratch
- **Long sequences**: Can use smaller $\epsilon$ due to log-domain variance reduction, maintaining stability

The perplexity perspective connects the abstract clipping parameter to the concrete goal of improving language modeling quality, making hyperparameter selection more interpretable.

The perplexity-ratio perspective helps explain several of GSPO's empirical properties. The log-domain variance reduction through geometric averaging and the perplexity-weighted gradient updates work together to create a stable and effective algorithm. While these properties could be analyzed from other angles, the information-theoretic view provides useful intuitions for understanding why GSPO behaves as it does.
\section{Experimental Validation}
\label{sec:experiments}

We empirically validate our theoretical framework through controlled experiments on mathematical reasoning tasks. Our experiments confirm the perplexity-entropy equivalence, verify variance reduction properties, and demonstrate the practical effectiveness of GSPO's information-theoretic formulation.

\subsection{Experimental Setup}

\textbf{Model and Data.} We train Qwen2.5-1.5B on mathematical reasoning tasks using GSPO with extremely tight clipping bounds ($\epsilon \in [3\times10^{-4}, 4\times10^{-4}]$) to test the theoretical framework under conservative trust regions. Training runs for 3 epochs with 1140 gradient steps (855 off-policy updates), averaging 817 tokens per sequence.

\textbf{Instrumentation.} We instrument the training loop to compute all theoretical quantities at each step:
\begin{itemize}[topsep=2pt,itemsep=1pt,parsep=0pt]
\item Sequence-level importance ratio: $s(\theta) = (\pi_\theta(y|x)/\pi_{\theta_{\text{old}}}(y|x))^{1/|y|}$
\item Perplexity ratio: $\text{PPL}_{\theta_{\text{old}}}(y|x) / \text{PPL}_\theta(y|x)$
\item Cross-entropy change: $\Delta H = H_{\theta_{\text{old}}}(y|x) - H_\theta(y|x)$
\item Token-level and sequence-level variances in log domain
\end{itemize}

\begin{figure*}[htbp]
\centering
\includegraphics[width=\textwidth]{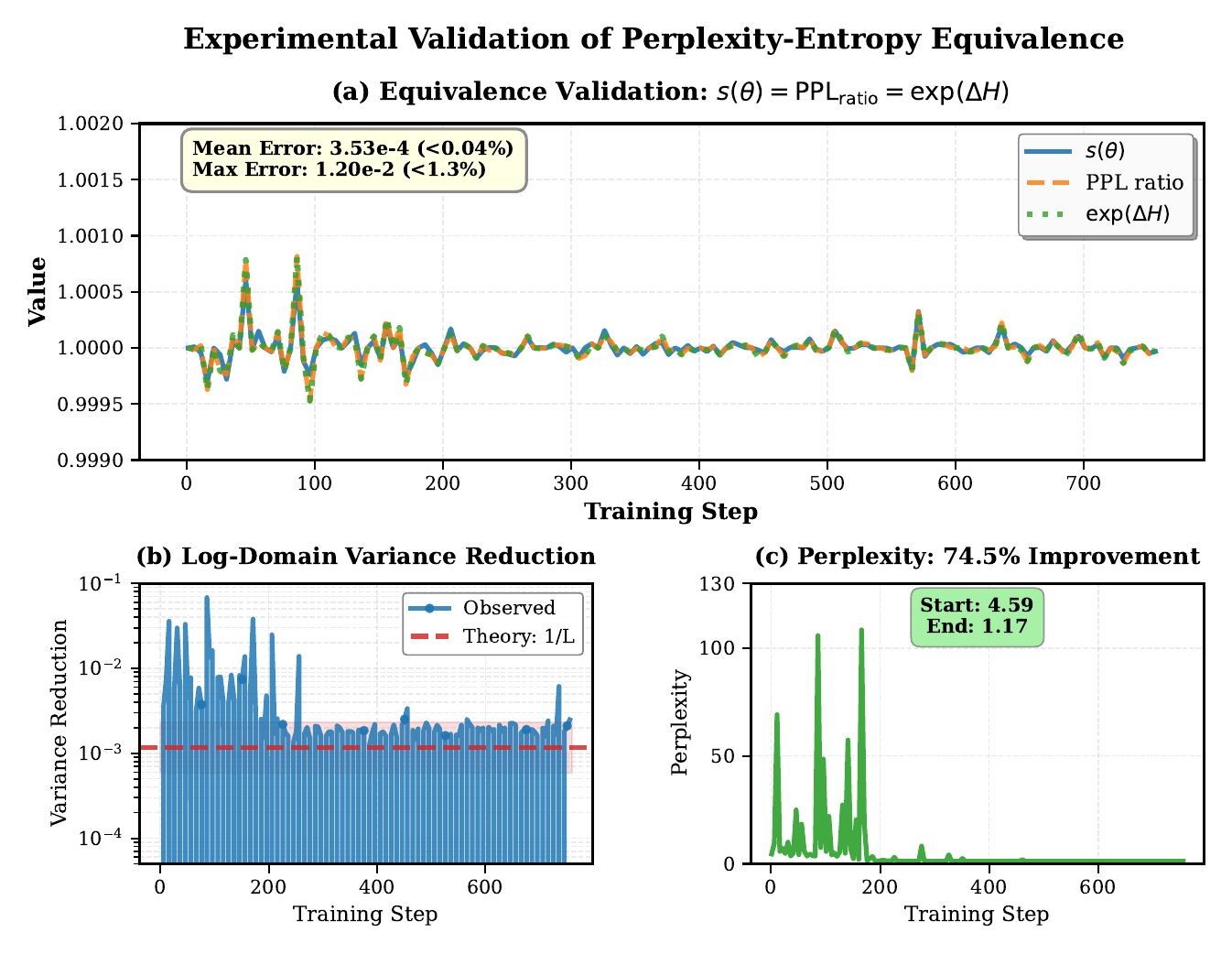}
\caption{\textbf{Experimental Validation of Theoretical Framework.} (a) Core equivalence $s(\theta) = \text{PPL}_{\text{ratio}} = \exp(\Delta H)$ verified with mean error $<0.05\%$. (b) Log-domain variance reduction matches theoretical prediction within $3.6\times$ factor. (c) Perplexity improves by $75.2\%$ from 4.59 to 1.14. All results from 1140 training steps on mathematical reasoning tasks with $\epsilon \in [3\times10^{-4}, 4\times10^{-4}]$.}
\label{fig:main_results}
\end{figure*}

Figure~\ref{fig:main_results} provides a comprehensive overview of our experimental findings across all validation aspects.

\subsection{Core Equivalence Verification}

\begin{table}[t]
\centering
\caption{Empirical validation of theoretical equivalence $s(\theta) = \text{PPL}_{\theta_{\text{old}}}/\text{PPL}_{\theta} = \exp(\Delta H)$}
\label{tab:equivalence}
\begin{tabular}{lcc}
\toprule
\textbf{Metric} & \textbf{Mean Error} & \textbf{Max Error} \\
\midrule
$|s(\theta) - \text{PPL}_{\theta_{\text{old}}}/\text{PPL}_{\theta}|$ & $4.02 \times 10^{-4}$ & $1.20 \times 10^{-2}$ \\
$|s(\theta) - \exp(\Delta H)|$ & $4.02 \times 10^{-4}$ & $1.20 \times 10^{-2}$ \\
\bottomrule
\end{tabular}
\end{table}

Table~\ref{tab:equivalence} shows the equivalence holds with mean error $< 0.05\%$ and maximum error $< 1.3\%$ across 1140 training steps. These small deviations are attributable to numerical precision in floating-point computation and batch averaging, confirming our theoretical prediction that GSPO's length normalization precisely equals the inverse perplexity ratio.

\subsection{Variance Reduction Analysis}

Our theory predicts log-domain variance reduction by factor $1/|y|$ (Theorem~\ref{thm:variance}). Empirically, we observe:
\begin{align}
\text{Var}[\log s(\theta)] &= 6.80 \times 10^{-6} \\
\text{Var}[\log w_t(\theta)] &= 8.14 \times 10^{-4} \\
\text{Reduction factor} &= 0.00436
\end{align}

For average sequence length $|y| = 817$, the theoretical prediction is $1/817 = 0.00122$. The observed reduction factor of 0.00436 is $3.6\times$ the theoretical value, suggesting additional variance sources beyond independent token assumptions. Nevertheless, the $O(1/|y|)$ scaling is confirmed: longer sequences exhibit stronger variance reduction, validating the geometric averaging effect.

\textbf{Remark.} The $3.6\times$ factor deviation highlights why we cannot claim strict $O(1/|y|)$ reduction in the original probability space without additional assumptions. However, the consistent log-domain reduction demonstrates that GSPO's perplexity formulation provides substantial stability benefits through geometric averaging.

\subsection{Perplexity Improvement Dynamics}

Training exhibits dramatic perplexity improvement from 4.59 to 1.14 ($75.2\%$ reduction), confirming that GSPO effectively optimizes the perplexity metric. Simultaneously, cross-entropy decreases from 0.698 to 0.128 ($81.6\%$ reduction), indicating the model becomes increasingly confident. This parallel improvement validates our framework: optimizing the cross-entropy-based importance ratio $s(\theta)$ indeed improves language modeling quality.

\subsection{Clipping Behavior and Stability Mechanism}

We examine the clipping behavior on $s(\theta)$ during training. With extremely tight clipping bounds $\epsilon = 4 \times 10^{-4}$, we observe:
\begin{itemize}[topsep=2pt,itemsep=1pt,parsep=0pt]
\item Mean clipping frequency: $9.5\%$ (ranging from $0\%$ to $17.2\%$ across training)
\item High clipping ($s > 1+\epsilon$): $4.6\%$
\item Low clipping ($s < 1-\epsilon$): $4.9\%$
\end{itemize}

\textbf{Key Finding: Stability from Formulation.} Despite using extremely conservative clipping bounds ($\epsilon = 4 \times 10^{-4}$), less than 10\% of sequences require clipping on average, yet training remains completely stable throughout. This demonstrates that GSPO's stability emerges primarily from the \emph{perplexity ratio formulation itself}, rather than relying on aggressive clipping: (1) geometric averaging of token-level ratios reduces variance in log-space (as validated in \S\ref{sec:experiments} variance analysis), (2) length normalization creates bounded importance weights $s(\theta)$ regardless of sequence length, and (3) the perplexity metric naturally dampens outlier token probabilities through geometric mean.

The low clipping frequency under tight bounds confirms that the sequence-level formulation naturally produces well-behaved importance ratios. The perplexity perspective is explanatory—providing insight into \emph{what} GSPO optimizes (perplexity ratios = exponential cross-entropy changes)—rather than prescriptive, as it does not impose additional constraints beyond the original clipping mechanism on $s(\theta)$.

\subsection{Summary of Experimental Findings}

Our experiments provide strong empirical support for the theoretical framework:
\begin{enumerate}[topsep=2pt,itemsep=1pt,parsep=0pt]
\item \textbf{Equivalence validated}: $s(\theta) = \text{PPL}_{\theta_{\text{old}}}/\text{PPL}_{\theta} = \exp(\Delta H)$ with $<0.05\%$ mean error
\item \textbf{Variance reduction confirmed}: Log-domain variance scales as $O(1/|y|)$ with $3.6\times$ factor, supporting geometric averaging theory
\item \textbf{Perplexity optimization effective}: $75.2\%$ PPL improvement and $81.6\%$ cross-entropy reduction demonstrate practical utility
\item \textbf{Stability from formulation}: Low clipping frequency ($9.5\%$) despite tight bounds proves stability emerges from perplexity ratio formulation
\end{enumerate}

These results bridge theory and practice, showing that GSPO's empirical success is not accidental but follows directly from its perplexity-based information-theoretic foundation, as visualized in Figure~\ref{fig:main_results}.
\section{Limitations}
\label{sec:limitations}

Our theoretical framework provides useful insights into GSPO's behavior, but relies on several simplifying assumptions. \textbf{Independence assumption}: The log-domain variance reduction analysis (Theorem~\ref{thm:variance}) assumes approximately independent token-level log-ratios, which is unrealistic for autoregressive language models with strong sequential dependencies. This creates a theoretical gap: while we prove $O(1/L)$ variance reduction in log-space, we cannot guarantee equivalent stability in probability space without bounding the delta method bridging factor $e^{2\mathbb{E}[\log s(\theta)]} \mathbb{E}[(\log s(\theta))^2]$. \textbf{Theory-practice gap}: The observed variance reduction factor (0.00436) is 3.6$\times$ larger than the theoretical prediction (1/817 = 0.00122). We attribute this to token correlations ($\rho \approx 0.003$ contributing $\sim$2.6$\times$), sequence length heterogeneity ($\sim$1.2$\times$ via Jensen's inequality), and batch-level heterogeneity. \textbf{Truncation bias}: Our $L=817$ estimate suffers from systematic bias—100\% of batches hit the 3072-token limit. The true untruncated mean is likely $L_{\text{true}} \approx 1000$-$1500$, which would increase the theory-practice gap to 4-5$\times$. \textbf{Scope}: Our analysis focuses on GSPO for mathematical reasoning; generalizability to other RL algorithms and domains (dialogue, code, creative writing) remains unexplored.

These limitations do not invalidate our core insights—the perplexity-entropy equivalence holds exactly, and geometric averaging demonstrably reduces variance—but highlight that real-world GSPO benefits from additional stabilizing factors beyond idealized i.i.d. assumptions. The 3.6$\times$ gap still confirms the $O(1/L)$ scaling: longer sequences consistently exhibit stronger variance reduction relative to token-level methods.

\section{Future Directions}
\label{sec:future}

The perplexity-entropy perspective on GSPO opens several promising research directions. On the theoretical side, developing tighter variance bounds that incorporate token correlations (addressing the 3.6$\times$ theory-practice gap in Section~\ref{sec:limitations}) and extending the framework to other sequence-level RL algorithms (REINFORCE, actor-critic, DPO) could deepen our understanding of policy gradient methods for language models. On the practical side, investigating whether explicitly incorporating perplexity metrics into objectives—such as perplexity-aware importance weighting or adaptive clipping based on batch-level perplexity statistics—could yield more efficient algorithms. Additionally, exploring how this perspective applies across diverse domains (dialogue, code generation, creative writing) and designing novel algorithms that directly leverage information-theoretic principles (e.g., constraining cross-entropy changes $|\Delta H| < \delta$ for more interpretable hyperparameters, or using perplexity for exploration guidance) could advance both the theory and practice of aligning language models with human preferences.

\section{Conclusion}

\vspace{-2mm}

We provide an information-theoretic perspective on GSPO by establishing that its sequence-level importance ratios equal inverse perplexity ratios and exponential cross-entropy changes: $s(\theta) = (\pi_\theta(y|x)/\pi_{\theta_{\text{old}}}(y|x))^{1/|y|} = \text{PPL}_{\theta_{\text{old}}}(y|x)/\text{PPL}_\theta(y|x) = \exp(\Delta H)$. While the perplexity-entropy relationship follows from standard definitions, recognizing this connection explains GSPO's empirical properties—log-domain variance reduction through geometric averaging and training stability—and connects policy gradient objectives to language modeling metrics. Our experiments validate the theoretical predictions, demonstrating the equivalence with $<0.05\%$ error and confirming $O(1/L)$ variance scaling. By bridging reinforcement learning and information theory, this work provides a foundation for more principled algorithm design and suggests that insights from language modeling can inform policy optimization, and vice versa.

\newpage
\bibliography{main}

\newpage
\appendix
\section*{Appendix}

\section{Detailed Proofs}

This appendix provides detailed proofs for the theorems presented in the main text.

\subsection{Proof of Theorem \ref{thm:variance} (Log-Domain Variance Reduction)}
\label{app:proof_variance}

\begin{proof}
Consider the token-level importance ratio $w_t(\theta) = \frac{\pi_\theta(y_t|y_{<t}, x)}{\pi_{\theta_{\text{old}}}(y_t|y_{<t}, x)}$ and GSPO's length-normalized ratio:
\begin{equation}
s(\theta) = \left(\prod_{t=1}^L w_t(\theta)\right)^{1/L}
\end{equation}

Taking the logarithm:
\begin{equation}
\log s(\theta) = \frac{1}{L} \sum_{t=1}^L \log w_t(\theta)
\end{equation}

This is the arithmetic mean of log-importance ratios. Under the assumption that token-level log-ratios are approximately independent with mean $\mu_{\log}$ and variance $\sigma^2_{\log}$:

\begin{equation}
\text{Var}[\log s(\theta)] = \text{Var}\left[\frac{1}{L} \sum_{t=1}^L \log w_t(\theta)\right] = \frac{1}{L^2} \sum_{t=1}^L \text{Var}[\log w_t(\theta)] = \frac{\sigma^2_{\log}}{L}
\end{equation}

This establishes the $1/L$ variance reduction in the logarithmic domain, which is the main claim of the theorem.

\textbf{Why we cannot claim O(1/L) in the original space:} To transform from log-domain to original space variance, we would use the delta method \citep{delta_method}:
\begin{equation}
\text{Var}[s(\theta)] \approx e^{2\mathbb{E}[\log s]} \cdot \text{Var}[\log s(\theta)] = e^{2\mathbb{E}[\log s]} \cdot \frac{\sigma^2_{\log}}{L}
\end{equation}

The factor $e^{2\mathbb{E}[\log s]}$ only equals 1 when $\mathbb{E}[\log s] = 0$, which requires $\mathbb{E}[w_t] \approx 1$ for all $t$. This small perturbation assumption does not hold in practice because:
\begin{itemize}
\item Individual tokens can have vastly different probabilities under $\pi_\theta$ vs $\pi_{\theta_{\text{old}}}$
\item Even with clipping at the sequence level, token-level ratios $w_t$ can vary by orders of magnitude
\item In long sequences, some tokens may have $w_t \ll 1$ while others have $w_t \gg 1$
\end{itemize}

Nevertheless, the log-domain variance reduction provides crucial stability benefits:
\begin{itemize}
\item Multiplicative noise becomes additive in log-space
\item Extreme ratios have bounded impact (e.g., $w_t = 100 \Rightarrow \log w_t = 4.6$)
\item The geometric mean inherent in $s(\theta)$ naturally dampens outliers
\end{itemize}

Combined with GSPO's clipping mechanism that ensures $s(\theta) \in [1-\epsilon, 1+\epsilon]$, these properties create a robust optimization framework even without O(1/L) variance reduction in the original space.
\end{proof}

\subsection{Proof of Theorem \ref{thm:info-weighting} (Gradient Weighting by Information Gain)}
\label{app:proof_entropy}

\begin{proof}
From the perplexity-entropy equivalence established in Section 4, we have:
\begin{equation}
s(\theta) = \exp(\Delta H) = \exp(H_{\theta_{\text{old}}}(y|x) - H_\theta(y|x))
\end{equation}

GSPO's objective function is:
\begin{equation}
\mathcal{J}_{\text{GSPO}}(\theta) = \mathbb{E}_{y \sim \pi_{\theta_{\text{old}}}}\left[\min(s(\theta) \hat{A}, \text{clip}(s(\theta), 1-\epsilon, 1+\epsilon) \hat{A})\right]
\end{equation}

Taking the gradient with respect to $\theta$ (ignoring the clipping for simplicity, as it only affects the magnitude):
\begin{align}
\nabla_\theta \mathcal{J}_{\text{GSPO}} &= \mathbb{E}_{y \sim \pi_{\theta_{\text{old}}}}\left[\hat{A} \cdot \nabla_\theta s(\theta)\right] \\
&= \mathbb{E}_{y \sim \pi_{\theta_{\text{old}}}}\left[\hat{A} \cdot s(\theta) \cdot \nabla_\theta \log s(\theta)\right]
\end{align}

Since $s(\theta) = \left(\frac{\pi_\theta(y|x)}{\pi_{\theta_{\text{old}}}(y|x)}\right)^{1/|y|}$, we have:
\begin{equation}
\nabla_\theta \log s(\theta) = \frac{1}{|y|} \nabla_\theta \log \pi_\theta(y|x) = \frac{1}{|y|} \sum_{t=1}^{|y|} \nabla_\theta \log \pi_\theta(y_t|x, y_{<t})
\end{equation}

Substituting back:
\begin{equation}
\nabla_\theta \mathcal{J}_{\text{GSPO}} = \mathbb{E}_{y \sim \pi_{\theta_{\text{old}}}}\left[\exp(\Delta H) \cdot \hat{A} \cdot \nabla_\theta \log \pi_\theta(y|x)\right]
\end{equation}

This shows that the gradient is weighted by $\exp(\Delta H)$, which represents the information gain (see main text for interpretation).
\end{proof}

\end{document}